\documentclass[letterpaper, 10 pt, conference]{ieeeconf}  

\IEEEoverridecommandlockouts                              

\usepackage{comment}
\usepackage{xcolor}
\usepackage{amsmath}
\usepackage{amsfonts}
\usepackage{amsthm}
\usepackage[short,c2,nocomma]{optidef}
\usepackage{algorithm}
\usepackage[noEnd]{algpseudocodex}

\let\labelindent\relax
\usepackage{enumitem}

\usepackage[hang,flushmargin]{footmisc} 
\makeatletter
\renewcommand{\@makefnmark}{\hbox{\textsuperscript{\raisebox{-0.1ex}{\normalfont\@thefnmark}}}}
\makeatother

\newtheorem{lemma}{Lemma}{\bfseries}{\itshape}
\newtheorem{theorem}{Theorem}{\bfseries}{\itshape}
\theoremstyle{remark}
\newtheorem{remark}{Remark}

\def\ntar{{n_\textnormal{tar}}}
\def\nagt{{n_\textnormal{agt}}}
\def\taua{{\tau_\textnormal{a}}}
\def\LFDT{{\textnormal{LFDT}}}
\def\EFAT{{\textnormal{EFAT}}}
\def\Length{{\textnormal{Len}}}

\def\LPB{{\textnormal{LP-}\mathcal{B}}}
\def\LB{{\textnormal{LB}}}
\def\UB{{\textnormal{UB}}}

\def\dist{{\textnormal{dist}}}

\makeatletter
\newcommand\fs@betterruled{%
  \def\@fs@cfont{\bfseries}\let\@fs@capt\floatc@ruled
  \def\@fs@pre{\vspace*{5pt}\hrule height.8pt depth0pt \kern2pt}%
  \def\@fs@post{\kern2pt\hrule\relax}%
  \def\@fs@mid{\kern2pt\hrule\kern2pt}%
  \let\@fs@iftopcapt\iftrue}
\floatstyle{betterruled}
\restylefloat{algorithm}
\makeatother

\title{\LARGE \bf
Optimal Solutions for the Moving Target Vehicle Routing Problem with Obstacles via Lazy Branch-and-Price
}

\author{Anoop Bhat$^{1}$ and Geordan Gutow$^{2}$ and David Neiman$^{1}$ and Surya Singh$^{3}$ and\\Zhongqiang Ren$^{4}$ and Sivakumar Rathinam$^{5}$ and Howie Choset$^{1}$
\thanks{$^{1}$Robotics Institute at Carnegie Mellon University, 5000 Forbes Ave., Pittsburgh, PA 15213. Emails: \{agbhat,
choset\}@andrew.cmu.edu.}%
\thanks{$^{2}$Mechanical and Aerospace Engineering at Michigan Technological University, Houghton, MI 49931. Email: gmgutow@mtu.edu}%
\thanks{$^{3}$Robotics and AI Institute, Cambridge, MA 02142. Email: ssingh@rai-inst.com}%
\thanks{$^{4}$UM-SJTU Joint Institute and Department of Automation at Shanghai Jiao Tong University, Shanghai, China. Email: zhongqiang.ren@sjtu.edu.cn}%
\thanks{$^{5}$Department of Mechanical Engineering and Department of Computer Science and Engineering at Texas A\&M University, College Station, TX 77843. Email: srathinam@tamu.edu}%
}

\begin{document}

\maketitle
\thispagestyle{empty}
\pagestyle{empty}

\begin{abstract}
The Moving Target Vehicle Routing Problem with Obstacles (MT-VRP-O) seeks trajectories for several agents that collectively intercept a set of moving targets. Each target has one or more time windows where it can be visited, and the agents must avoid static obstacles and satisfy speed and capacity constraints. Previously studied VRPs are often addressed using a framework called branch-and-price. This framework requires computing the cost for a single agent to visit a given sequence of target-time window pairings, for several candidate sequences. These sequences are called tours. Computing tour costs is more expensive in the MT-VRP-O than in previously studied VRPs due to the presence of both moving targets and static obstacles. Thus, we introduce a new exact algorithm, Lazy Branch-and-Price with Relaxed Continuity (Lazy BPRC), for the MT-VRP-O. The key idea in Lazy BPRC is to use cheap-to-compute lower bounds on tour costs where costs are traditionally used, and lazily update lower bounds to true costs. When computing a tour’s true cost is needed, we do so by searching for a shortest path on a graph of convex sets, and we introduce a new heuristic to accelerate the search. We demonstrate that Lazy BPRC runs up to an order of magnitude faster than two ablations.
\end{abstract}

\section{INTRODUCTION}
Finding trajectories for multiple agents to visit multiple moving targets is necessary in applications such as defense \cite{helvig2003moving,smith2021assessment,stieber2022DealingWithTime}, orbital refueling \cite{bourjolly2006orbit}, and recharging mobile robots collecting data from the seafloor \cite{Li2019RendezvousPlanning}. These applications can be modeled as variations of the Vehicle Routing Problem (VRP) \cite{toth2014vehicle,archetti2025beyond}. The VRP assumes a set of stationary targets and a set of agents. The agents all start at a location called the depot. Each target has a demand of goods. Each agent has a capacity on the amount of goods it can deliver. Given the travel cost between every pair of targets, and between targets and the depot, the VRP seeks a sequence of targets for each agent with minimal sum of costs, such that the sum of demands of targets visited by an agent does not exceed the capacity. In the Moving Target VRP (MT-VRP) \cite{bhat2026optimal}, the targets are moving, and we seek not only a sequence of targets for each agent, but a trajectory intercepting each target in the sequence. Each target has one or more time windows where it can be met, and the agents have a speed limit. Prior work on the MT-VRP assumes piecewise-linear target trajectories \cite{bhat2026optimal}, and we do so as well. In this work, we address a generalization of the MT-VRP where the agents must avoid static obstacles. We call this generalization the MT-VRP with Obstacles (MT-VRP-O), illustrated in Fig. \ref{fig:intro_fig}.

\begin{figure}
    \centering
    \includegraphics[width=0.47\textwidth]{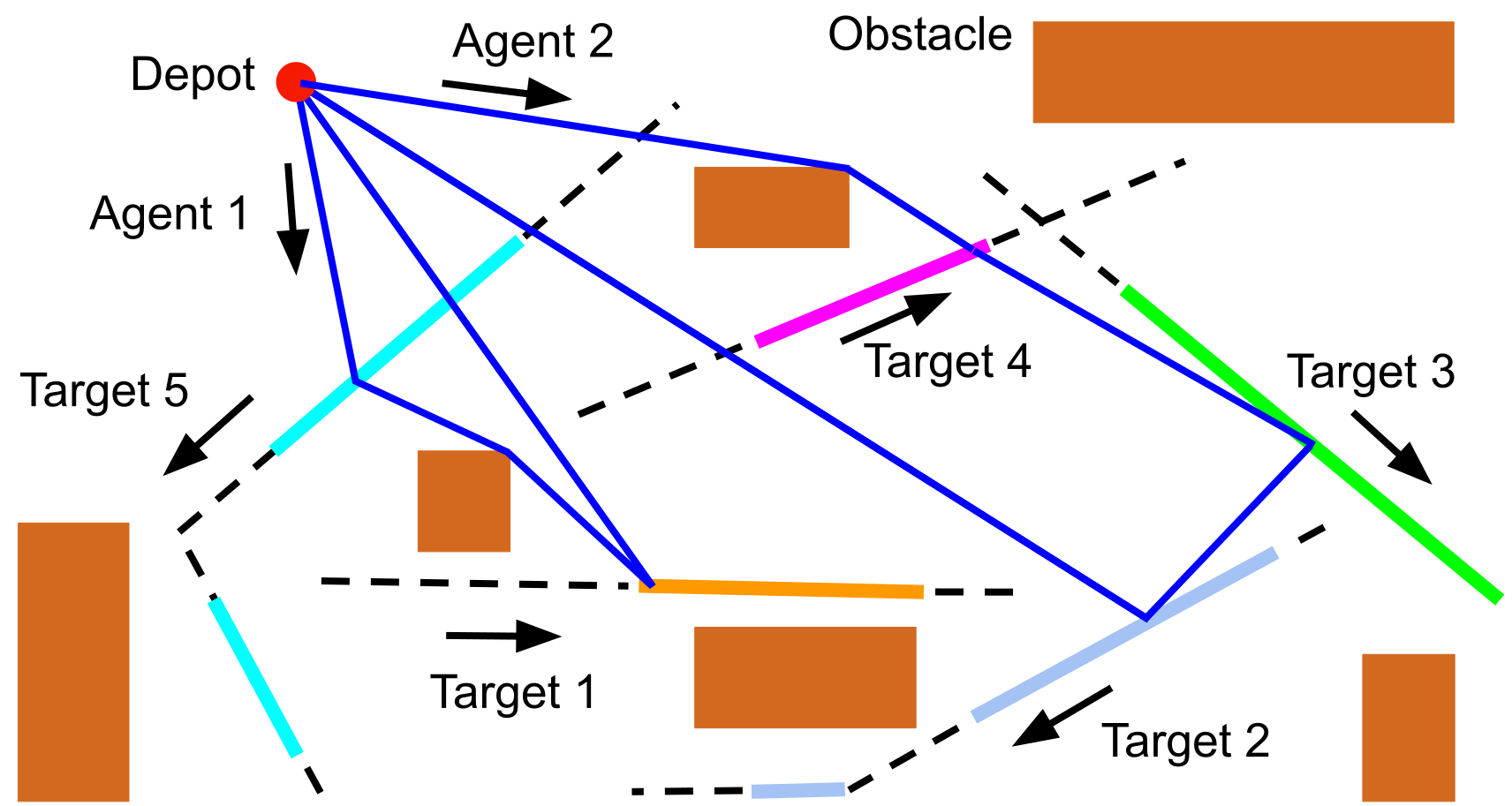}
    \vspace{-0.2cm}
    \caption{Targets move through obstacle environment. Time windows of targets are shown in bold colored lines, and each target must be intercepted in one of its time windows. Agents begin and end at depot, intercepting targets while avoiding obstacles.}
    \vspace{-0.5cm}
    \label{fig:intro_fig}
\end{figure}

The MT-VRP-O generalizes the Traveling Salesman Problem (TSP), and thus finding an optimal solution is NP-hard \cite{helvig2003moving}. No prior methods find an optimal solution for the MT-VRP-O. The closest related work finds optimal solutions for the MT-VRP without obstacles \cite{bhat2026optimal}, using the branch-and-price framework \cite{costa2019exact}. In this work, we develop a new branch-and-price algorithm for the MT-VRP-O called Lazy Branch-and-Price with Relaxed Continuity (Lazy BPRC).

Branch-and-price alternates between a restricted master problem (RMP) and a pricing problem. The RMP aims to select a sequence of target-time window pairings (called a tour) for each agent to follow, from a restricted subset of tours. The pricing problem adds tours to the restricted subset. Conventionally, solving the RMP requires computing the cost for an agent to follow each tour in the restricted subset. The challenge in the MT-VRP-O is that computing a tour's cost requires collision-free motion planning between moving targets. Thus, Lazy BPRC defers computing tour costs by solving the RMP using lower bounds on costs, computed via motion planning with relaxed continuity constraints. We lazily compute true costs as needed, by searching for a shortest path on a Graph of Convex Sets (GCS). We use our continuity relaxation strategy in a heuristic for the search. Our results show that Lazy BPRC runs up to 43 times faster than a non-lazy ablation, and up to 23 times faster than an ablation using an existing obstacle-unaware heuristic \cite{bhat2025AComplete}.

\section{RELATED WORK}
While prior work has not studied the MT-VRP-O, related problems have been studied. For example, \cite{Li2019RendezvousPlanning} studies a multi-agent Moving Target TSP with Obstacles (multi-agent MT-TSP-O), which lacks the capacity constraints from the MT-VRP-O. However, their approach only allows interception at sampled points along the targets' trajectories, and thus \cite{Li2019RendezvousPlanning} does not provide optimal solutions, unlike our method. On the other hand, for the single-agent MT-TSP-O, \cite{bhat2025AComplete} presents a solver that finds optimal solutions. This approach alternates between a high-level search to generate a tour, and a low-level search to find a trajectory intercepting the tour's targets to compute its cost. The low-level search in \cite{bhat2025AComplete} solves a Shortest Path Problem on a GCS (SPP-GCS). We similarly solve an SPP-GCS to evaluate a tour's cost, and we provide a novel heuristic for the search that we show outperforms the heuristic from \cite{bhat2025AComplete}.

The work in \cite{bhat2026optimal} addresses the MT-VRP without obstacles, using an approach called Branch-and-Price with Relaxed Continuity (BPRC). Our approach, Lazy BPRC, extends BPRC to handle obstacles, using a new obstacle-aware continuity relaxation strategy, as well as lazy tour cost evaluation. We show in Section \ref{sec:numerical_results} that our lazy evaluation outperforms BPRC's non-lazy tour cost evaluation.

\section{PROBLEM SETUP}\label{sec:problem_setup}
We have $n_\text{tar}$ targets moving in $\mathbb{R}^2$. The set of targets is $\{1, 2, \dots, n_\text{tar}\}$. Each target $i$ has a demand of goods $d_i$. Target $i$ has $n_{\text{win}}(i)$ time windows, and $[\underline{t}_{i,j}, \overline{t}_{i,j}]$ is the $j$th time window of target $i$. The trajectory of target $i$ is $\tau_i : \mathbb{R} \rightarrow \mathbb{R}^2$. We assume $\tau_i$ has constant velocity in each time window, but possibly different velocities in different time windows. We assume targets do not pass through obstacles during their time windows.

The number of agents is $n_\text{agt}$. Each agent has a capacity $d_\text{max}$ on the amount of goods it can deliver. When visiting a target, an agent must deliver the target's full demand. Each agent has a speed limit $v_\text{max}$. No target moves faster than $v_\text{max}$ in its time windows. We denote an agent's trajectory as $\taua$. An agent trajectory $\taua$ \emph{intercepts} target $i$ if $\taua(t) = \tau_i(t)$ at some $t$ in some time window of target $i$. All agents start at a depot $p_\text{d} \in \mathbb{R}^2$. The agents must avoid collision with static obstacles, i.e. they must never enter the interior of any obstacle.\footnote{If two obstacles intersect at exactly one point, we also require that no agent passes through this point.} We call a collision-free agent trajectory satisfying the speed limit a \emph{feasible} agent trajectory.

The MT-VRP-O seeks an assignment of targets to agents, and a feasible trajectory for each agent intercepting its assigned targets, such that every target is assigned to some agent, and for each agent, the sum of demands of its assigned targets does not exceed its capacity. In this work, we aim to minimize the sum of the agents' distances traveled.

\section{INTEGER LINEAR PROGRAM (ILP) FOR MT-VRP-O}\label{sec:ilp_mt_vrp_o}
We consider a \emph{target-window graph} $\mathcal{G}_\text{tw} = (\mathcal{V}_\text{tw}, \mathcal{E}_\text{tw})$. Each node in $\mathcal{V}_\text{tw}$ is a pairing of a target $i$ with one of its time windows, called a \emph{target-window}. For example, $\gamma_{i,j} = (i, [\underline{t}_{i,j}, \overline{t}_{i,j}])$ denotes the $j$th target-window of target $i$. $\mathcal{V}_\text{tw}$ contains all possible target-windows and two fictitious target-windows corresponding to the depot: $\gamma_{0, 1} = (0, [0, 0])$, and $\gamma_{0, 2} = (0, [0, \infty))$. Here, the depot is treated as a fictitious target 0. An agent trajectory $\taua$ \emph{intercepts} target-window $\gamma_{i,j}$ if $\taua$ intercepts target $i$ at some $t \in [\underline{t}_{i,j}, \overline{t}_{i,j}]$. $\mathcal{E}_\text{tw}$ has an edge from $\gamma_{i,j}$ to $\gamma_{i',j'}$ if $i \neq i'$. A path in $\mathcal{G}_\text{tw}$ \emph{visits} a target if it contains (i.e. visits) one of the target's target-windows.

A \emph{tour} is a path in $\mathcal{G}_\text{tw}$ beginning at $\gamma_{0, 1}$, ending at $\gamma_{0, 2}$, and visiting each non-fictitious target at most once, such that (i) the sum of demands of visited targets is at most $d_\text{max}$, and (ii) a feasible agent trajectory exists intercepting the tour's target-windows in sequence. For a tour $\Gamma$, let $\Gamma[n]$ denote the $n$th element of $\Gamma$; we use the same bracket notation to indicate the $n$th element of any sequence. Let $\Length(\Gamma)$ denote the number of target-windows in $\Gamma$. An agent trajectory $\taua$ \emph{follows} $\Gamma$ if $\taua$ intercepts the target-windows in $\Gamma$ in sequence, and $\taua$ is feasible. For a tour $\Gamma$, the cost of $\Gamma$, denoted as $c^*(\Gamma)$, is the distance traveled by a minimum-distance trajectory following $\Gamma$. We compute the cost of a tour by solving an SPP-GCS, described in Section \ref{sec:tour_cost}.

Let the set of all tours be $\mathcal{S}$. We formulate the MT-VRP-O as the problem of selecting a set $\mathcal{F}_\text{sol} \subseteq \mathcal{S}$, containing up to $\nagt$ tours, such that every target is visited by some selected tour, and the sum of tour costs is minimized. In particular, for a tour $\Gamma$, let $\alpha(i, \Gamma) = 1$ if $\Gamma$ visits target $i$ and let $\alpha(i, \Gamma) = 0$ otherwise. Define a binary variable $\theta_k$ which equals 1 if tour $\Gamma_k$ is selected and 0 otherwise. We formulate the MT-VRP-O as the following ILP:
\begin{mini!}
{\{\theta_k\}_{\Gamma_k \in \mathcal{S}}}{\sum\limits_{\Gamma_k \in \mathcal{S}}c^*(\Gamma_k)\theta_k\label{eqn:ilp_objective}}{\label{optprob:mt_vrp_o_ilp}}{}
\addConstraint{\sum\limits_{\Gamma_k \in \mathcal{S}}\theta_k \leq \nagt\label{eqn:ilp_agent_limit}}{}
\addConstraint{\sum\limits_{\Gamma_k \in \mathcal{S}}\alpha(i, \Gamma_k)\theta_k\geq 1 \;\; \forall i \in \{1, \dots, \ntar\}\label{eqn:ilp_visit_all_targets}}{}
\addConstraint{\theta_k \in \{0, 1\} \;\; \forall \Gamma_k \in \mathcal{S}\label{eqn:ilp_theta_binary}}{}
\end{mini!}
\eqref{eqn:ilp_objective} minimizes the sum of tour costs, \eqref{eqn:ilp_agent_limit} ensures that no more than $\nagt$ tours are selected, \eqref{eqn:ilp_visit_all_targets} ensures all targets are visited, and \eqref{eqn:ilp_theta_binary} enforces that each $\theta_k$ is binary.
\section{LAZY BPRC ALGORITHM}\label{sec:lazy_bprc}
\subsection{Overview}\label{sec:overview}
ILP \eqref{optprob:mt_vrp_o_ilp} poses three challenges:
\begin{enumerate}
    \item The binary constraint \eqref{eqn:ilp_theta_binary} is nonconvex.
    \item The number of tours, and thus the number of variables, is factorial in the number of targets.
    \item Computing the tour costs in the objective \eqref{eqn:ilp_objective} requires collision-free motion planning.
\end{enumerate}
To address the first challenge, we use a convex relaxation of ILP \eqref{optprob:mt_vrp_o_ilp}, which replaces the binary constraint \eqref{eqn:ilp_theta_binary} with $\theta_k \geq 0$. This relaxation allows \emph{fractional solutions}, where some $\theta_k$ values take non-integer values. Rather than strictly selecting or not selecting a tour, a fractional solution can select a tour $\Gamma_k$ with some proportion, e.g. by setting $\theta_k = 0.5$. Thus, we perform a branch-and-bound search, which seeks a set of constraints to add to the relaxation such that the optimal solution for ILP \eqref{optprob:mt_vrp_o_ilp} is optimal for the constrained relaxation.

This search requires solving several constrained relaxations, which still pose the latter two challenges above: a large number of variables and expensive-to-compute tour costs. To handle the large number of variables, we use a standard method called \emph{column generation} \cite{feillet2010tutorial}. Column generation maintains a \emph{restricted subset} of tours, $\mathcal{F}$. When solving a constrained relaxation, we only optimize over $\theta_k$ for $\Gamma_k \in \mathcal{F}$, implicitly fixing other $\theta_k$ to zero, and lazily add tours to $\mathcal{F}$ while optimizing. Column generation makes Lazy BPRC a branch-and-price algorithm rather than only branch-and-bound.

Our key contribution is how we address expensive-to-compute tour costs. In particular, rather than computing the cost of a tour $\Gamma_k$ when we add $\Gamma_k$ to $\mathcal{F}$, we compute lower and upper bounds on the cost, denoted as $\LB(\Gamma_k)$ and $\UB(\Gamma_k)$, respectively. Fig. \ref{fig:bounding_tour_costs} illustrates the bounds, and subsequent subsections give computation details. We use $\LB(\Gamma_k)$ and $\UB(\Gamma_k)$ where branch-and-price traditionally uses true costs, and we lazily update bounds to equal costs.

Section \ref{sec:branch_and_bound} describes the branch-and-bound search. Sections \ref{sec:lower_bound_on_tour_cost} and \ref{sec:upper_bound_on_tour_cost} describe how we compute lower and upper bounds on tour costs. Section \ref{sec:tour_cost} describes how we compute tour costs.
\begin{figure}
    \centering
    \vspace{0.2cm}
    \includegraphics[width=0.47\textwidth]{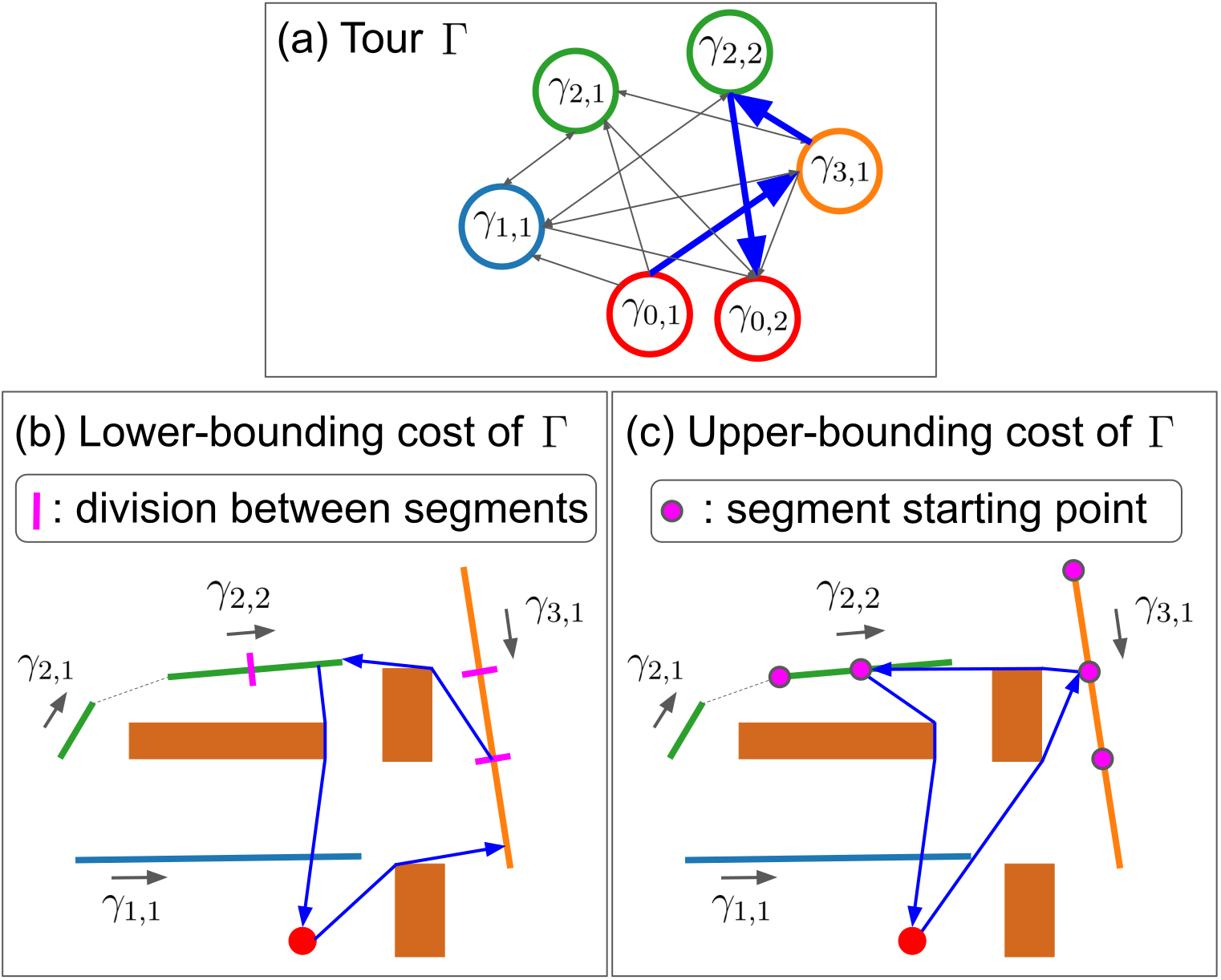}
    \vspace{-0.3cm}
    \caption{(a) Example tour $\Gamma$. (b) To compute the lower bound $\LB(\Gamma)$ on $c^*(\Gamma)$, we divide each target-window visited by $\Gamma$ into segments. A lower bound on $c^*(\Gamma)$ is the cost of a trajectory $\taua$ that optimally follows $\Gamma$ subject to a relaxed continuity constraint: the arrival and departure points at a target may differ, but must lie on the same segment. $\LB(\Gamma)$ lower-bounds the cost of such a discontinuous trajectory. (c) The upper bound $\UB(\Gamma)$ is the cost of a trajectory that optimally follows $\Gamma$ subject to the constraint that targets are only intercepted at segment start points.}
    \vspace{-0.7cm}
    \label{fig:bounding_tour_costs}
\end{figure}

\subsection{Branch-and-Bound}\label{sec:branch_and_bound}
Alg. \ref{alg:LazyBPRC} describes the branch-and-bound. Line \ref{algline:BPRC_initialize_stack} initializes a stack of \emph{branch-and-bound nodes}, where each node represents a set of constraints to add to the relaxation of ILP \eqref{optprob:mt_vrp_o_ilp}, to eliminate fractional solutions. As in \cite{ozbaygin2017branch}, we only use constraints that disallow any $\Gamma_k$ traversing a particular edge $e \in \mathcal{E}_\text{tw}$, i.e. constraints that fix $\theta_k = 0$ for such $\Gamma_k$. Thus, a branch-and-bound node is a set of disallowed edges $\mathcal{B} \subseteq \mathcal{E}_\text{tw}$.

\begin{algorithm}
\caption{Lazy BPRC}\label{alg:LazyBPRC}
\begin{algorithmic}[1]
\State STACK = [$\emptyset$]\label{algline:BPRC_initialize_stack}
\State $\mathcal{F}_\text{inc} =$ GenerateFeasibleSolution()\label{algline:bprc_init_incumbent}
\State $\mathcal{F} = \mathcal{F}_\text{inc}$\label{algline:init_restricted_subset}
\While{STACK not empty}\label{algline:BPRC_main_loop}
    \State $\mathcal{B}$ = STACK.pop()
    \While{true}\Comment{\textnormal{Solve}-$\LPB$ loop}\label{algline:solve_LPB}
        \State $\theta, \lambda$ = SolveRMP($\mathcal{F}$)\label{algline:solve_rmp}
        \State UpdateIncumbent($\theta, \mathcal{F}_\text{inc}$)\label{algline:update_incumbent_after_rmp}
        \State added = AddPotentiallyImprovingTours($\lambda, \mathcal{F}$)\label{algline:find_improving_tours}
        \If{added} continue
        \EndIf
        \If{$\theta$ is integer and $\LB(\theta) < \UB_\text{inc}$}\label{algline:check_eval}
            \State EvaluateTours($\theta$)\label{algline:eval_tours}
            \State UpdateIncumbent($\theta, \mathcal{F}_\text{inc}$)\label{algline:update_incumbent_after_eval}
        \Else{} break
        \EndIf
    \EndWhile
    \If{$\LB(\theta) \geq \UB_\text{inc}$}\label{algline:BPRC_check_lb_worse_than_ub}
        continue\label{algline:BPRC_no_successors}
    \EndIf
    \State PushSuccessors(STACK, $\mathcal{B}, \theta$)
\EndWhile
\State return $\mathcal{F}_\text{inc}$
\end{algorithmic}
\end{algorithm}

Line \ref{algline:bprc_init_incumbent} generates an initial feasible solution to ILP \eqref{optprob:mt_vrp_o_ilp}, i.e. a set of tours $\mathcal{F}_\text{inc}$, using the method in Appendix A. We call $\mathcal{F}_\text{inc}$ the \emph{incumbent}. We continually update $\mathcal{F}_\text{inc}$ to be the solution with best sum of upper bounds found so far. Let $\UB_\text{inc}$ be the sum of upper bounds of tours in $\mathcal{F}_\text{inc}$. Line \ref{algline:init_restricted_subset} initializes the restricted subset of tours $\mathcal{F}$.

Each loop iteration begins by popping a branch-and-bound node $\mathcal{B}$ from the stack. Next, we enter the inner loop on Line \ref{algline:solve_LPB}. This loop solves the relaxation of ILP \eqref{optprob:mt_vrp_o_ilp}, with the constraint that $\theta_k = 0$ for any $\Gamma_k$ traversing an edge in $\mathcal{B}$. We call this constrained relaxation $\LPB$, and the loop is called the Solve-$\LPB$ loop. To defer tour cost computations, we define the \emph{master problem} as $\LPB$, with $c^*(\Gamma_k)$ replaced by $\LB(\Gamma_k)$ in the objective. To avoid solving for all variables, we define the \emph{restricted master problem} (RMP) as the master problem, but $\mathcal{S}$ is replaced with $\mathcal{F}$, i.e. the RMP only allows selecting tours in $\mathcal{F}$.

Each iteration of the Solve-$\LPB$ loop first solves the RMP. This yields a primal solution $\theta$ and a dual solution $\lambda$. $\lambda$ has the form $\lambda = (\lambda_0, \lambda_1, \dots, \lambda_\ntar)$, where $\lambda_0 \in \mathbb{R}_{\leq 0}$ is the dual variable for \eqref{eqn:ilp_agent_limit}, and for $i > 0$, $\lambda_i \in \mathbb{R}_{\geq 0}$ is the dual variable for \eqref{eqn:ilp_visit_all_targets}. The RMP could be infeasible at this step, which occurs if all feasible MT-VRP-O solutions that can be constructed from tours in $\mathcal{F}$ traverse an edge in $\mathcal{B}$. If the RMP is infeasible, we treat $\theta$ and $\lambda$ as NULL. If the RMP is feasible, and $\theta$ is integer, $\theta$ corresponds to a feasible solution $\mathcal{F}_\text{sol}$ to the MT-VRP-O. If the sum of upper bounds of tours in $\mathcal{F}_\text{sol}$ is smaller than $\UB_\text{inc}$, we set $\mathcal{F}_\text{inc} = \mathcal{F}_\text{sol}$ (Line \ref{algline:update_incumbent_after_rmp}). 

Next, we search for tours to add to $\mathcal{F}$ that could improve our RMP solution (Line \ref{algline:find_improving_tours}). If the RMP is infeasible, we use the method from Appendix A to search for a set of tours $\mathcal{F}_\text{new}$ feasible for the MT-VRP-O traversing no edge in $\mathcal{B}$. If we find $\mathcal{F}_\text{new}$, we add its tours to $\mathcal{F}$. If the RMP is feasible, we instead solve a \emph{pricing problem}, which seeks tours to add to $\mathcal{F}$ that lower the RMP's optimal cost. We solve the pricing problem by adapting the labeling algorithm from \cite{bhat2026optimal} to handle obstacles, as detailed in Appendix B. Whether or not the RMP is feasible, if we add tours to $\mathcal{F}$, we go back to Line \ref{algline:solve_rmp} and solve the RMP again.

Otherwise, let $\LB(\theta) = \sum\limits_{\Gamma_k \in \mathcal{F}}\LB(\Gamma_k)\theta_k$.\footnote{If the RMP is infeasible, let $\LB(\theta) = \infty$.} Line \ref{algline:check_eval} checks if $\theta$ is integer, i.e. if $\theta$ selects tours as opposed to fractions of tours, and if $\LB(\theta) < \UB_\text{inc}$. If both conditions hold, $\theta$ selects a set of tours that may be better than the incumbent. Thus, on Line \ref{algline:eval_tours}, we get the selected set of tours $\mathcal{F}_\text{sol}$. For each tour $\Gamma_k \in \mathcal{F}_\text{sol}$ whose cost has not been computed, we compute $c^*(\Gamma_k)$ and update $\LB(\Gamma_k)$ and $\UB(\Gamma_k)$ to equal $c^*(\Gamma_k)$. We refer to this update as \emph{evaluating} $\Gamma_k$. If the cost of $\mathcal{F}_\text{sol}$ is less than $\UB_\text{inc}$, we set $\mathcal{F}_\text{inc} = \mathcal{F}_\text{sol}$ (Line \ref{algline:update_incumbent_after_eval}). 

If we evaluated tours, we go back to Line \ref{algline:solve_rmp} and solve the RMP again. This is needed since evaluating tours changes the RMP's objective. Thus, the optimal solution $\theta^*$ to ILP \eqref{optprob:mt_vrp_o_ilp} could now be optimal for the RMP, and we solve the RMP again to ensure we find $\theta^*$. Otherwise, we break from the Solve-$\LPB$ loop. At this point, our RMP solution $\theta$ is not necessarily optimal for $\LPB$.
For example, some tour $\Gamma_k$ selected by $\theta$ may be unevaluated, with a cost much higher than its lower bound. Thus, the optimal solution to $\LPB$, which uses actual costs rather than lower bounds, may not select $\Gamma_k$.
However, we ensure $\LB(\theta)$ lower-bounds the optimal cost of $\LPB$ (Lemma \ref{lemma:final_rmp_cost_lb_mp_cost}).

Line \ref{algline:BPRC_check_lb_worse_than_ub} checks if $\LB(\theta) \geq \UB_\text{inc}$. If this holds, adding constraints to $\LPB$ cannot improve $\mathcal{F}_\text{inc}$, so we prune $\mathcal{B}$. However, if $\LB(\theta) < \UB_\text{inc}$, the failure of Line \ref{algline:check_eval} implies $\theta$ is fractional. We create two successors of $\mathcal{B}$, called $\mathcal{B}'$ and $\mathcal{B}''$, such that $\theta$ is feasible for neither $\LPB'$ nor $\LPB''$, but all integer solutions to $\LPB$ are feasible for $\LPB'$ or $\LPB''$. To do so, we first select an edge $e \in \mathcal{E}_\text{tw}$, not incident to the depot, that is neither disallowed nor mandated in $\mathcal{B}$, via ``conventional branching" \cite{ozbaygin2017branch}. We let $\mathcal{B}' = \mathcal{B} \cup \{e\}$. We define $\mathcal{B}''$ such that $e$ must be traversed in $\LPB''$, by disallowing other edges appropriately (see \cite{ozbaygin2017branch}). We then push $\mathcal{B}'$ and $\mathcal{B}''$ onto the stack.
 
\subsection{Computing Lower Bound on Tour Cost}\label{sec:lower_bound_on_tour_cost}
We now describe how we compute $\LB(\Gamma)$ for a tour $\Gamma$, as illustrated in Fig. \ref{fig:bounding_tour_costs} (b). We perform three steps. First, we divide the targets' trajectories into segments. Second, we compute costs of agent trajectories between every pair of segments at different targets, for pairs where such a trajectory exists. Third, we compute $\LB(\Gamma)$ as the cost of a concatenation of these pairwise trajectories. This concatenation would produce a possibly discontinuous trajectory, as in Fig. \ref{fig:bounding_tour_costs} (b).

\subsubsection{Segments}
We divide each target-window $\gamma_{i,j}$ into \emph{segments}, where $\xi_{i,j,k} = (i, [\underline{t}_{i,j,k}, \overline{t}_{i,j,k}])$ is the $k$th segment of $\gamma_{i,j}$. To determine the number of segments per target-window, we first specify the number of segments per target, denoted as $n_\text{seg,tar}$. For each target $i$, we allocate segments to its windows using $n_\text{seg,tar}$ within the formula from BPRC \cite{bhat2026optimal}, which gives more segments to longer windows. The depot gets two segments: $\xi_{0, 1} = \xi_{0, 1, 1}$, corresponding to $\gamma_{0, 1}$, and $\xi_{0, 2} = \xi_{0, 2, 1}$, corresponding to $\gamma_{0, 2}$. For a target-window $\gamma$, $n_\text{seg}(\gamma)$ is the number of segments allocated to $\gamma$.

\subsubsection{Computing Pairwise Trajectory Costs}
For every pair of segments $(\xi, \xi')$ at different targets, we compute the shortest collision-free path in space from $\xi$ to $\xi'$, ignoring time constraints, via the method from \cite{asano1986visibility}.\footnote{\cite{asano1986visibility} only returns the shortest path if it contains obstacle vertices. Thus, we first compute the shortest path between segments ignoring obstacles; if it is collision-free, we return this path. Otherwise, we run \cite{asano1986visibility}. Also, \cite{asano1986visibility} computes visibility polygons as a subroutine; we use CGAL \cite{cgal:eb-24a} for this, not the method in \cite{asano1986visibility}. Finally, to prevent agents from passing through single-point intersections between obstacles (discussed in Section \ref{sec:problem_setup}), our implementation inflates obstacles by 1e-7 when constructing the visibility graph for \cite{asano1986visibility} and visibility graphs used later in our algorithm.} Let $c$ be the path's distance traveled. Let $t$ be the start time of $\xi$, and let $t'$ be the end time of $\xi'$. Let $i$ be the target of $\xi$, and let $i'$ be the target of $\xi'$. Next, we check if $t + c/v_\text{max} \leq t'$. This checks if there exists an agent trajectory following the path, starting when target $i$ enters $\xi$, and finishing before target $i'$ leaves $\xi'$. This is a looser condition than requiring interception of $i$ and $i'$. It only ensures that when the agent starts following the path, the agent is on the same segment as target $i$, and after following the path and waiting until time $t'$, the agent is on the same segment as target $i'$. If the trajectory existence check passes, the corresponding trajectory has cost $c$.

\subsubsection{Computing Lower Bound}
Given a tour $\Gamma$, let $t_n$ be the earliest time at which $\Gamma[n]$ can be intercepted while following $\Gamma$. This is computed as in \cite{bhat2024AComplete}, Appendix A. We define a \emph{segment-graph} $\mathcal{G}_\text{seg} = (\mathcal{V}_\text{seg}, \mathcal{E}_\text{seg})$, where the set of nodes $\mathcal{V}_\text{seg}$ is the set of all segments of each $\Gamma[n]$. For each edge $(\Gamma[n], \Gamma[n + 1]) \in \mathcal{E}_\text{tw}$ traversed by $\Gamma$, we connect edges in $\mathcal{E}_\text{seg}$ from every segment of $\Gamma[n]$ to every segment of $\Gamma[n + 1]$ whose end time is at least the earliest interception time $t_{n + 1}$. For each edge $(\xi, \xi') \in \mathcal{E}_\text{seg}$, if the trajectory existence check passed for this edge in step 2, the cost is the trajectory cost. Otherwise, the cost is $\infty$.

$\LB(\Gamma)$ is the cost of a least-cost path in $\mathcal{G}_\text{seg}$ from $\xi_{0,1}$ to $\xi_{0,2}$. If $\Gamma$ is produced in the pricing problem, $\LB(\Gamma)$ is computed as a byproduct (Appendix B). If $\Gamma$ is produced by the initial feasible solution method (Appendix A), we note that $\mathcal{G}_\text{seg}$ is a directed acyclic graph (DAG), and we compute $\LB(\Gamma)$ using a DAG shortest path algorithm \cite{cormen2001introduction}. Concatenating the trajectories corresponding to the shortest path's edges would yield a trajectory as in Fig. \ref{fig:bounding_tour_costs} (b).

\subsection{Computing Upper Bound on Tour Cost}\label{sec:upper_bound_on_tour_cost}
We now describe how we compute $\UB(\Gamma)$ for a tour $\Gamma$. As shown in Fig. \ref{fig:bounding_tour_costs} (c), $\UB(\Gamma)$ is the cost of a trajectory that follows $\Gamma$ and only intercepts targets at sampled points in space-time. The sample points are the segment start points of the targets and two points at the depot: $(p_\text{d}, 0)$ and $(p_\text{d}, t_\text{max})$, where $t_\text{max}$ is an upper bound on the duration of a distance-optimal trajectory following $\Gamma$ (e.g. the latest ending time over all time windows, multiplied by 2).\footnote{We sample at segment start points because the labeling algorithm in Appendix B needs an upper bound on the cost of reaching each segment start point when following $\Gamma$.} We compute $\UB(\Gamma)$ in two steps. First, we compute costs of trajectories between every pair of sample points at different targets, for pairs where such a trajectory exists. Second, we compute $\UB(\Gamma)$ as the cost of a concatenation of these pairwise trajectories.

\subsubsection{Computing Pairwise Trajectory Costs} For every pair of sample points $s$ and $s'$, we compute the shortest path in space from $s$ to $s'$, ignoring timing, using Dijkstra's algorithm on a visibility graph \cite{lozano1979algorithm}. Let $t$ and $t'$ be the times of $s$ and $s'$, respectively, and let $c$ be the path's distance traveled. If $t + c/v_\text{max} \leq t'$, a feasible agent trajectory exists from $s$ to $s'$ and its cost is $c$.

\subsubsection{Computing Upper Bound} We construct a \emph{sample-point-graph} $\mathcal{G}_\text{samp} = (\mathcal{V}_\text{samp}, \mathcal{E}_\text{samp})$. The set of nodes $\mathcal{V}_\text{samp}$ is the set of all samples whose target-windows are visited by $\Gamma$. For each edge $(\gamma, \gamma') \in \mathcal{E}_\text{tw}$ traversed by $\Gamma$, we connect an edge in $\mathcal{E}_\text{samp}$ from every sample at $\gamma$ to every sample at $\gamma'$. The edge cost from $s$ to $s'$ is the trajectory cost from $s$ to $s'$ computed in step 2, if the trajectory existence check passed, and $\infty$ otherwise. 

$\UB(\Gamma)$ is the cost of a least-cost path in $\mathcal{G}_\text{samp}$ from $(p_\text{d}, 0)$ to $(p_\text{d}, t_\text{max})$. If $\Gamma$ is produced in the pricing problem, $\UB(\Gamma)$ is computed as a byproduct (Appendix B). If $\Gamma$ is produced by the initial feasible solution method (Appendix A), we note that $\mathcal{G}_\text{samp}$ is a DAG, and we compute $\UB(\Gamma)$ via a DAG shortest path algorithm \cite{cormen2001introduction}. Concatenating the trajectories corresponding to the shortest path's edges would yield a trajectory as in Fig. \ref{fig:bounding_tour_costs} (c).

\subsection{Computing Tour Cost via SPP-GCS}\label{sec:tour_cost}
We now describe how we compute the cost of a tour $\Gamma_k$, i.e. the objective coefficient of $\theta_k$ in ILP \eqref{optprob:mt_vrp_o_ilp}. This requires computing a trajectory that follows $\Gamma_k$.

At the beginning of Lazy BPRC, we decompose free space into convex regions $\mathcal{A}_1, \mathcal{A}_2, \dots, \mathcal{A}_{n_\text{reg}}$, where $n_\text{reg}$ is the number of regions. The purpose of the regions is to enforce collision-avoidance, by requiring our trajectory to stay in the union of the regions. Next, we define a GCS $\mathcal{G}_\text{cs} = (\mathcal{V}_\text{cs}, \mathcal{E}_\text{cs})$, where the set of nodes $\mathcal{V}_\text{cs}$ consists of convex sets in space-time. For each region $\mathcal{A}$ in our free space decomposition, we have a \emph{region-node} $\mathcal{X}_{\mathcal{A}} = \mathcal{A} \times \mathbb{R}$. For each target-window $\gamma_{i,j}$ visited by $\Gamma$, we have a \emph{window-node} $\mathcal{X}_{i,j}$, consisting of the set of space-time points along $\tau_i$ within $[\underline{t}_{i,j}, \overline{t}_{i,j}]$. Since we assumed that a target's velocity is constant within a time window, $\mathcal{X}_{i,j}$ is a line segment in space-time, which is a convex set. We refer to nodes in $\mathcal{G}_\text{cs}$ as \emph{GCS-nodes}. An edge connects a set $\mathcal{X}$ to a set $\mathcal{X}'$ if $\mathcal{X}$ intersects $\mathcal{X}'$, unless $\mathcal{X}$ and $\mathcal{X}'$ are both region-nodes whose regions intersect at exactly one point.\footnote{We do not connect an edge in the single-point intersection case to prevent the scenario discussed in Section \ref{sec:problem_setup}, where an agent passes through a single-point intersection between two obstacles.} We refer to a path in $\mathcal{G}_\text{cs}$ as a \emph{GCS-path}. We say a GCS-path $P$ \emph{visits} target-window $\gamma_{i,j}$ if $P$ contains $\mathcal{X}_{i,j}$. A trajectory $\taua$ \emph{follows} GCS-path $P$ if it passes through the sets in $P$ in sequence.

We find a trajectory following $\Gamma$ by finding a GCS-path $P$ visiting the target-windows in $\Gamma$ in sequence, as well as a trajectory following $P$. We use an algorithm similar to FMC* \cite{bhat2025AComplete}, but without the speedup methods specific to a minimum-time objective, since we minimize distance traveled. We also replace the heuristic in FMC* with a heuristic described later in this section. Finally, we replace the focal search in FMC* with best-first search, since we seek optimal solutions rather than bounded-suboptimal ones.

We search with a priority queue called OPEN, containing GCS-paths. We initialize OPEN with GCS-path $(\mathcal{X}_{0,1})$, which stays at the depot. Each GCS-path $P$ on OPEN has an $f$-value, $f(P)$, which lower-bounds the cost of a trajectory that follows $\Gamma$ and initially follows $P$.

Each iteration pops a GCS-path $P$ from OPEN, then iterates over each GCS-node $\mathcal{X}$ adjacent to the last GCS-node in $P$. If $\mathcal{X}$ is a window-node $\mathcal{X}_{i,j}$, and $P$ has not visited the target-windows before $\gamma_{i,j}$ in $\Gamma$, we discard $\mathcal{X}$, since we want a path visiting the target-windows in $\Gamma$ in sequence. If $\mathcal{X}$ is a region-node, and $\mathcal{X}$ already occurs in $P$ after the last window-node in $P$, we discard $\mathcal{X}$, since visiting a region twice without visiting a target in between is suboptimal. If we do not discard $\mathcal{X}$, we create a successor GCS-path $P'$ by appending $\mathcal{X}$ to $P$, then compute $f(P')$ as follows.

Suppose the first target-window in $\Gamma$ unvisited by $P'$ is $\Gamma[n]$. We optimize a trajectory $\taua$ with a collision-free portion $\tau_{\text{a},P'}$ that follows $P'$, an obstacle-unaware portion $\tau_\text{a,next}$ that travels from the end of $\tau_{\text{a},P'}$ to $\Gamma[n]$, and an obstacle-unaware portion $\tau_\text{a,rem}$ that intercepts remaining target-windows in $\Gamma$ after $\Gamma[n]$, in sequence (Fig. \ref{fig:gcs_fig} (a)). For a trajectory $\tau$, let $\dist(\tau)$ be the distance traveled. Naively, we could find $\taua$ that minimizes distance traveled and set $f(P') = \dist(\taua)$. While this provides a lower bound, we use a strengthened bound which considers obstacles after the interception of $\Gamma[n]$.

In particular, our trajectory optimization's minimizes
\begin{align}
    \dist(\tau_{\text{a},P'}) + \dist(\tau_\text{a,next}) + \max\{\dist(\tau_\text{a,rem}), h_n(t)\}\label{eqn:trajopt_objective}
\end{align}
where $t$ is the end time of $\tau_\text{a,next}$, and $h_n$ is a function we design. Specifically, $h_n(t)$ lower-bounds the remaining cost of following $\Gamma$ after intercepting $\Gamma[n]$ at time $t$. We design $h_n$ to be convex to ensure the trajectory optimization is convex.

Let $\gamma_{i,j}$ be $\Gamma[n]$. First, for each segment $\xi_{i,j,k}$ of $\Gamma[n]$, we compute a value $h_{i,j,k}$, which lower-bounds the remaining cost for $t \in [\underline{t}_{i,j,k}, \overline{t}_{i,j,k}]$. In particular, $h_{i,j,k}$ is the cost of the shortest path in the segment-graph for $\Gamma$ from $\xi_{i,j,k}$ to $\xi_{0,2}$, which we compute via the single-source shortest path algorithm for DAGs \cite{cormen2001introduction}. The $h_{i,j,k}$ values can be used to form a piecewise-constant heuristic, but this heuristic would not be convex in general (Fig. \ref{fig:gcs_fig} (b)). Thus, we construct $h_n$ as an affine, and thus convex, function with the following property: for all segments $\xi_{i,j,k}$, for all $t \in [\underline{t}_{i,j,k}, \overline{t}_{i,j,k}]$, $h_n(t)$ lower-bounds $h_{i,j,k}$.


First, if only one segment of $\gamma_{i,j}$ has a finite $h_{i,j,k}$ value, we define $h_n(t) = h_{i,j,k}$. Otherwise, we first construct a temporary function $h_\text{tmp}(t) = mt + b$, by applying ordinary least-squares linear regression to the set of points $\{(\underline{t}_{i,j,k}, h_{i,j,k}) \; | \; k \in \{1, 2, \dots, n_\text{seg}(\gamma_{i,j})\}, h_{i,j,k} \neq \infty\}$. This aims to make $h_\text{tmp}$ roughly match each segment's heuristic value at its start time. However, any finite $m$ and $b$ can be chosen at this step: the next step will still guarantee the lower-bounding property on $h_n$ stated above. Different choices of $m$ and $b$ only affect the bound's tightness.

Next, we define $h_n$ as a line with the same slope as $h_\text{tmp}$, but shifted vertically to ensure that for each segment $\xi_{i,j,k}$, $h_n(t)$ lower-bounds $h_{i,j,k}$ for $t \in [\underline{t}_{i,j,k}, \overline{t}_{i,j,k}]$. First, note that a line lower-bounds a value $h_{i,j,k}$ on interval $[\underline{t}_{i,j,k}, \overline{t}_{i,j,k}]$ if the line lower-bounds $h_{i,j,k}$ at the interval's boundaries. To achieve this for a segment $\xi_{i,j,k}$, we must shift $h_\text{tmp}$ downward by at least $r_k = \max\{h_\text{tmp}(\underline{t}_{i,j,k}) - h_{i,j,k}, h_\text{tmp}(\overline{t}_{i,j,k}) - h_{i,j,k}\}$, i.e. the max overestimation of $h_{i,j,k}$ at the boundaries of the segment's time window. Thus, to get a lower bound for all segments, we must shift $h_\text{tmp}$ down by $r_\text{max} = \max\limits_{k}r_k$. This yields $h_n(t) = h_\text{tmp}(t) - r_\text{max}$.

We use $h_n$ within the objective function \eqref{eqn:trajopt_objective} in our trajectory optimization for GCS-path $P'$. $f(P')$ is the trajectory optimization's optimal cost. The trajectory optimization is convex. Thus, if it is feasible, we find $f(P')$ using Gurobi \cite{gurobi} and push $P'$ onto OPEN. If the optimization is infeasible, we discard $P'$. Once we have found a GCS-path $P^*$ that visits all target-windows in $\Gamma$, such that $f(P') \geq f(P^*)$ for all $P'$ on OPEN, we terminate, and $c^*(\Gamma) = f(P^*)$.
\begin{figure}
    \centering
    \vspace{0.2cm}
    \includegraphics[width=0.47\textwidth]{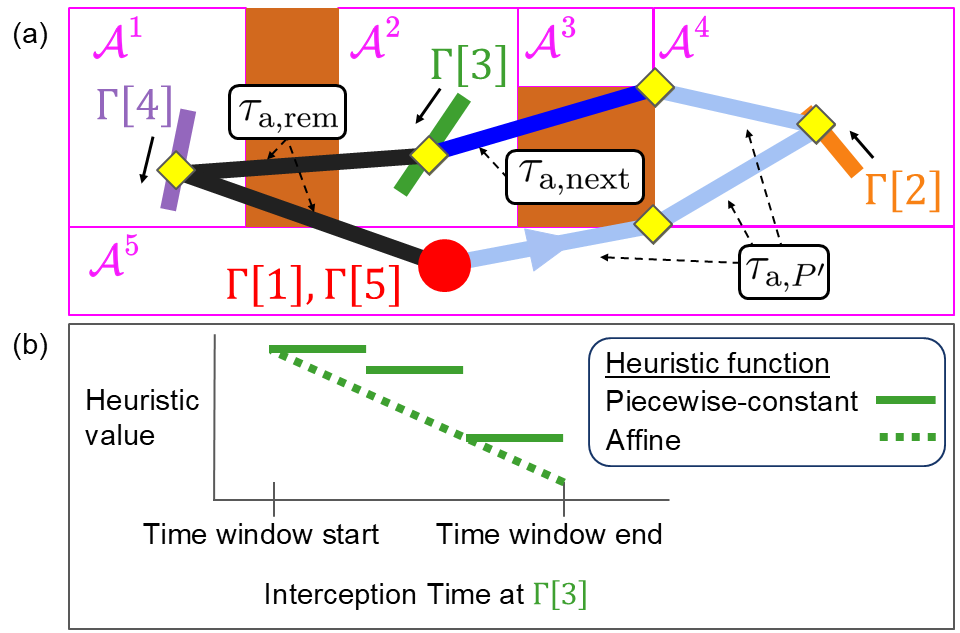}
    \vspace{-0.3cm}
    \caption{(a) Computing $f(P')$ for $P' = (\mathcal{X}_{\Gamma[1]}, \mathcal{X}_{\mathcal{A}^5}, \mathcal{X}_{A^4}, \mathcal{X}_{\Gamma[2]}, \mathcal{X}_{\mathcal{A}^4}. \mathcal{X}_{\mathcal{A}^3})$. Pink boxes are regions decomposing free space. $\tau_{\text{a},P'}$ passes through all sets in $P'$ in sequence. $\tau_\text{a,next}$ travels in a straight line to the next unvisited target-window in $P'$, i.e. $\Gamma[3]$, ignoring obstacles. $\tau_\text{a,rem}$ follows the portion of $\Gamma$ after $\Gamma[3]$, ignoring obstacles. Let $t$ be the time when $\tau_\text{a,next}$ intercepts $\Gamma[3]$. $f(P') = \dist(\tau_{\text{a},P'}) + \dist(\tau_\text{a,next}) + \max\{\dist(\tau_\text{a,rem}), h_3(t)\}$. (b) $h_3(t)$ is an affine function constructed to be no larger than a piecewise-constant heuristic, which we obtain by segmenting the targets' trajectories.}
    \vspace{-0.6cm}
    \label{fig:gcs_fig}
\end{figure}

\section{THEORETICAL ANALYSIS}\label{sec:theoretical_analysis}
\begin{lemma}\label{lemma:final_rmp_cost_lb_mp_cost}
Let $c^*(\LPB)$ be the optimal cost of $\LPB$. If we return no tours in the pricing problem, $\LB(\theta) \leq c^*(\LPB)$.
\end{lemma}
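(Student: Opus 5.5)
The plan is a chain of inequalities through an intermediate problem. Let the \emph{master problem} MP be $\LPB$ with each $c^*(\Gamma_k)$ replaced by $\LB(\Gamma_k)$, optimized over all of $\mathcal{S}$. I will prove
\[
\LB(\theta) = c^*(\text{RMP}) = c^*(\text{MP}) \leq c^*(\LPB).
\]
The first equality holds by definition, since $\theta$ is an optimal RMP solution and $\LB(\theta)$ is its objective value. The infeasible case is handled separately at the end.

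\textbf{Last inequality.} MP and $\LPB$ have the same feasible set: the same covering and agent-limit constraints, $\theta_k \geq 0$, and $\theta_k = 0$ for tours using edges in $\mathcal{B}$. They differ only in their objective coefficients. I will show $\LB(\Gamma_k) \leq c^*(\Gamma_k)$ for every tour. Take a minimum-distance feasible trajectory $\taua$ following $\Gamma_k$, and let $T_n$ be its interception time of $\Gamma_k[n]$. By the definition of $t_n$, we have $T_n \geq t_n$. Let $\xi_n$ be the segment of $\Gamma_k[n]$ containing $T_n$; its end time is at least $T_n \geq t_n$, so the edges $(\xi_n,\xi_{n+1})$ lie in $\mathcal{E}_\text{seg}$. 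The portion of $\taua$ between $T_n$ and $T_{n+1}$ is a collision-free spatial path from a point of $\xi_n$ to a point of $\xi_{n+1}$. Hence its length is at least the shortest segment-to-segment distance $c$, and its duration gives $\text{start}(\xi_n) + c/v_\text{max} \leq T_n + c/v_\text{max} \leq T_{n+1} \leq \text{end}(\xi_{n+1})$. So the existence check passed, and the edge cost is at most this piece's length. Summing over $n$, the path $(\xi_n)$ in $\mathcal{G}_\text{seg}$ costs at most $c^*(\Gamma_k)$, which gives $\LB(\Gamma_k)\le c^*(\Gamma_k)$. Any evaluated tours have $\LB = c^*$, so they are consistent with this. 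Consequently every feasible $\theta$ has MP cost at most its $\LPB$ cost, and $c^*(\text{MP}) \le c^*(\LPB)$.

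\textbf{Middle equality.} This is standard column-generation LP duality, and I expect it to be the main obstacle. I need to use that the pricing problem (Appendix B) is exact: returning no tours certifies that no tour $\Gamma_k\in\mathcal{S}$ avoiding $\mathcal{B}$ has negative reduced cost $\LB(\Gamma_k) - \lambda_0 - \sum_i \alpha(i,\Gamma_k)\lambda_i$ under the current duals. The labeling algorithm uses the same segment-graph lower bounds as $\LB$, so it must be checked that it computes reduced costs with respect to exactly these $\LB$ values. Granting that, $\lambda$ is feasible for the dual of MP. By weak duality, $c^*(\text{MP})$ is at least the dual objective $\nagt\lambda_0+\sum_i\lambda_i$, which equals $c^*(\text{RMP})$ by strong duality for the RMP. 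Since the RMP restricts MP, $c^*(\text{RMP}) \ge c^*(\text{MP})$, and equality follows.

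\textbf{Infeasible case.} If the RMP is infeasible and no tours are returned, the Appendix A search found no feasible tour set avoiding $\mathcal{B}$. Assuming that search is complete, $\LPB$ is infeasible and $c^*(\LPB)=\infty=\LB(\theta)$, so the claim holds.
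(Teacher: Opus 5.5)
Your first equality is fine, and so is your proof that $\LB(\Gamma_k)\le c^*(\Gamma_k)$; that argument is more careful than anything the paper writes out.

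\textbf{The gap is the middle equality} $c^*(\text{RMP})=c^*(\text{MP})$. It requires an empty pricing return to certify $\LB(\Gamma_k)-c_\text{dual}(\Gamma_k)\ge 0$ for every tour in $\mathcal{S}$ avoiding $\mathcal{B}$. The paper gives you nothing of that strength.

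Appendix B guarantees only that the lazy acceptance test adds a superset of the tours that the original tests (i)--(ii) would add, and those tests use $c^*$. Combined with the exactness of the BPRC labeling algorithm with respect to true costs, an empty return certifies only $c^*(\Gamma_k)-c_\text{dual}(\Gamma_k)\ge 0$ for all such tours.

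Because $\LB\le c^*$, your property implies this one, but not conversely. Consider a tour with $\LB(\Gamma)-c_\text{dual}(\Gamma)<0\le c^*(\Gamma)-c_\text{dual}(\Gamma)$. The original algorithm is under no obligation to find it, and nothing guarantees the labeling search, whose pruning is designed to be safe for true costs, ever generates it. In that situation $\lambda$ is infeasible for the dual of your MP, and $c^*(\text{RMP})>c^*(\text{MP})$ is possible. Your chain then breaks, even though the lemma is still true. You also assert more than the lemma needs: it never asks that $\theta$ be optimal for the $\LB$-costed master problem.

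\textbf{The repair is the paper's proof: skip MP and aim dual feasibility directly at $\LPB$ with its true costs.}
\begin{enumerate}
\item Strong duality for the RMP gives $\LB(\theta)=\nagt\lambda_0+\sum_i\lambda_i$.
\item The $c^*$-based pricing certificate makes $\lambda$ feasible for the dual of $\LPB$. For tours already in $\mathcal{F}$, you can also get this from RMP dual feasibility together with your bound $\LB\le c^*$.
\item Weak duality for $\LPB$ then yields $\nagt\lambda_0+\sum_i\lambda_i\le c^*(\LPB)$, which is the claim.
\end{enumerate}

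Your infeasible-RMP case rests on completeness of the Appendix A search, which is an extra assumption you should flag. The paper's proof does not treat that case at all, since strong duality already presupposes a feasible RMP.
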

\begin{proof}
Referring to values from Section \ref{sec:branch_and_bound}, strong duality implies
\begin{align}
  \LB(\theta) = \sum\limits_{i = 1}^{\ntar}\lambda_i + \nagt \lambda_0\label{eqn:strong_duality_result}
\end{align}
where the RHS is the cost of $\lambda$ for the dual of the RMP: this dual is the same as (18)-(21) in \cite{feillet2010tutorial}, but costs are replaced with lower bounds. If the labeling algorithm in \cite{bhat2026optimal} returns no tours, it guarantees $\lambda$ is feasible for the dual of $\LPB$ (the same as (18)-(21) in \cite{feillet2010tutorial}, but with a constraint (19) for all tours, not just tours in the restricted subset). The cost of $\lambda$ in the dual of $\LPB$ is $\sum\limits_{i = 1}^{\ntar}\lambda_i + \nagt \lambda_0$, which lower-bounds $c^*(\LPB)$ by weak duality. This and \eqref{eqn:strong_duality_result} imply $\LB(\theta) \leq c^*(\LPB)$.
\end{proof}

\begin{theorem}
Alg. \ref{alg:LazyBPRC} finds an optimal MT-VRP-O solution.
\end{theorem}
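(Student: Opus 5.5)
The plan is to argue by contradiction on the branch-and-bound tree. Let $\mathcal{F}^*$ be an optimal MT-VRP-O solution with cost $c^*$, and let $\theta^*$ be its integer indicator vector. The incumbent $\mathcal{F}_\text{inc}$ is always a feasible solution. This holds at initialization by Appendix A, and thereafter by the updates on Lines \ref{algline:update_incumbent_after_rmp} and \ref{algline:update_incumbent_after_eval}, which only install integer RMP solutions; these are sets of at most $\nagt$ tours covering all targets. Since $\UB(\Gamma)\geq c^*(\Gamma)$ for every tour, the true cost of $\mathcal{F}_\text{inc}$ is at most $\UB_\text{inc}$.

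It therefore suffices to show that at termination $\UB_\text{inc}\le c^*$. Then the returned solution has true cost at most $c^*$, so it is optimal. I would also need to check that ``sum of upper bounds'' and true cost of the returned set coincide, or at least that the cost is at most $\UB_\text{inc}$, which is all that is needed.

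The first key step is an invariant, proved by induction over pops from the stack: as long as $\UB_\text{inc}>c^*$, some node $\mathcal{B}$ on the stack has $\theta^*$ feasible for $\LPB$, meaning no tour of $\mathcal{F}^*$ traverses an edge of $\mathcal{B}$. The root $\emptyset$ satisfies this. When such a node is popped, there are three cases.
\begin{enumerate}
\item The node is pruned at Line \ref{algline:BPRC_check_lb_worse_than_ub}. By Lemma \ref{lemma:final_rmp_cost_lb_mp_cost}, $\UB_\text{inc}\le\LB(\theta)\le c^*(\LPB)\le c^*$, which contradicts $\UB_\text{inc}>c^*$. The lemma applies because we only break after the pricing step added nothing. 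The infeasible-RMP case needs separate care: if the Appendix A search finds no feasible set avoiding $\mathcal{B}$, then $\LPB$ is genuinely infeasible. I will take this completeness of the Appendix A method as given.
\item The node is branched. The successors $\mathcal{B}',\mathcal{B}''$ are constructed so that every integer solution of $\LPB$ is feasible for one of them, so the invariant passes to a successor.
\item The node is dropped without pushing successors. This happens only through pruning, so it is covered by the first case.
\end{enumerate}
Termination of the whole procedure needs an argument as well. There are finitely many edges, so the tree is finite. Within a Solve-$\LPB$ loop, each iteration either adds a new tour from the finite set $\mathcal{S}$ or evaluates at least one unevaluated tour. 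The second alternative holds because we re-enter only when $\LB(\theta)<\UB_\text{inc}$ for an integer $\theta$. If all selected tours were already evaluated, then $\LB(\theta)$ equals the true cost, which equals the sum of upper bounds of that solution. That sum would already be at least $\UB_\text{inc}$ after the update on Line \ref{algline:update_incumbent_after_rmp}, a contradiction.

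Once the stack empties, the invariant forces $\UB_\text{inc}\le c^*$, which completes the proof.

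The main obstacle is the pruning step. There I must use that the lower bounds $\LB(\Gamma)$ really lower-bound $c^*(\Gamma)$, so that $c^*(\LPB)$ computed with lower bounds never exceeds the true-cost optimum. I must also confirm that the labeling algorithm of Appendix B is exact with respect to lower-bound reduced costs, so that Lemma \ref{lemma:final_rmp_cost_lb_mp_cost} applies. I would justify $\LB(\Gamma)\le c^*(\Gamma)$ by projecting an optimal trajectory following $\Gamma$ onto the segments containing its interception points. Its legs then give a path in $\mathcal{G}_\text{seg}$ whose edges pass the existence check, since each leg's collision-free length is at least the shortest segment-to-segment distance, and the timing inequality holds because the segment start and end times bracket the actual interception times. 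The interception time is also at least $t_{n+1}$, so the required edge exists.
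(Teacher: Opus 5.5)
Your invariant (``$\UB_\text{inc}=c^*$, or some node on the stack admits $\theta^*$'') is the same as the paper's (i)/(ii) induction, and your pruning case via Lemma~\ref{lemma:final_rmp_cost_lb_mp_cost} is the paper's argument. On three points you are more careful than the paper: termination of the inner loop, the infeasible-RMP case, and the justification of $\LB(\Gamma)\le c^*(\Gamma)$.

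The gap is in your cases 2 and 3. You take for granted that a node which is not pruned gets branched, so that a node is dropped ``only through pruning.'' But PushSuccessors has to choose an edge $e$ that is not incident to the depot and is neither disallowed nor mandated in $\mathcal{B}$. If no such edge exists, the node cannot be split. If $\theta^*$ was feasible for that $\LPB$, your invariant is lost. This can really happen. Suppose every fractionally selected tour visits a single target, for example two tours visiting target $i$ in different windows, each with weight $1/2$. Then every edge those tours use is depot-incident. Nothing in your proposal prevents all other non-depot edges from already having been fixed by earlier branching.

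The paper's proof spends its Case 2 on exactly this step:
\begin{enumerate}
\item Since the node is not pruned, $\LB(\theta)<\UB_\text{inc}$, so the exit condition of the Solve-$\LPB$ loop forces $\theta$ to be fractional.
\item It then uses a hypothesis your proposal never mentions: the RMP is solved by an LP solver that returns basic solutions. With this it shows that some fractionally selected tour $\Gamma$ has $\Length(\Gamma)>3$. If all fractional tours were single-target tours, those covering a given target would have equal cost, and a basic solution would select one of them integrally.
\item It takes another fractional tour $\Gamma'$ and considers the first non-depot edge where $\Gamma$ and $\Gamma'$ differ. That edge is not disallowed, because the admissible tour $\Gamma$ uses it. The paper argues it is not mandated, because $\Gamma'$ leaves its source by a different edge. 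So a legal branching edge exists.
\end{enumerate}
You need to add this step, including the basic-solution assumption, before your branching case goes through.
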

\begin{proof}
Let $\mathcal{F}_\text{opt}$ be an optimal MT-VRP-O solution, let $c_\text{opt}$ be its cost, and let $\theta_\text{opt}$ be the corresponding solution to ILP \eqref{optprob:mt_vrp_o_ilp}. We show by induction that whenever we run Alg. \ref{alg:LazyBPRC}, Line \ref{algline:BPRC_main_loop}, either (i) $\UB_\text{inc} = c_\text{opt}$, or (ii) $\theta_\text{opt}$ is feasible for $\LPB$ for some $\mathcal{B}$ on the stack.

\textbf{Base Case} The first node pushed onto the stack is $\mathcal{B} = \emptyset$. $\LPB$ is a relaxation of ILP \eqref{optprob:mt_vrp_o_ilp}, so $\theta_\text{opt}$ is feasible for $\LPB$.

\textbf{Induction Hypothesis} Suppose on Line \ref{algline:BPRC_main_loop}, (i) or (ii) holds.

\textbf{Induction Step} Suppose (i) holds. We never increase $\UB_\text{inc}$, and $\UB_\text{inc}$ cannot be smaller than $c_\text{opt}$ by the optimality of $c_\text{opt}$, so if Line \ref{algline:BPRC_main_loop} is ever executed again, (i) will still hold.

Now suppose (ii) holds. If $\mathcal{B}$ is not popped at this iteration, (ii) still holds. If $\mathcal{B}$ is popped, we have two cases.

\underline{Case 1} Within the Solve-$\LPB$ loop (Line \ref{algline:solve_LPB}), we set $\UB_\text{inc} = c_\text{opt}$. Then (i) holds when we run Line \ref{algline:BPRC_main_loop} next.

\underline{Case 2} $\UB_\text{inc} \neq c_\text{opt}$ after the Solve-$\LPB$ loop. This, and optimality of $c_\text{opt}$, imply $c_\text{opt} < \UB_\text{inc}$. Lemma \ref{lemma:final_rmp_cost_lb_mp_cost} and feasibility of $\theta_\text{opt}$ for $\LPB$ imply $\LB(\theta) \leq c_\text{opt}$. This and $c_\text{opt} < \UB_\text{inc}$ imply $\LB(\theta) < \UB_\text{inc}$. This, and the termination of the Solve-$\LPB$ loop, imply $\theta$ is fractional, as mentioned in Section \ref{sec:branch_and_bound}. This means at least two tours have fractional $\theta_k$. Let $\Gamma$ and $\Gamma'$ be two tours whose $\theta_k$ are fractional, such that $\Length(\Gamma) > 3$.\footnote{Fractional $\theta$ implies such $\Gamma$ exists, if we solve the RMP with a linear program solver that returns basic solutions, e.g. Gurobi \cite{gurobi} with default settings. In particular, consider a solution where each fractionally selected tour has length 3, i.e. it only visits one non-depot target. Consider any non-depot target $i$. In the set of fractionally selected tours visiting $i$, tours only differ in the window where they visit $i$. Tours in this set have the same cost: otherwise, the set's cheapest tour would have $\theta_k = 1$. A basic solution would not fractionally select this set's tours. It would make $\theta_k = 1$ for one tour in the set, make $\theta_k = 0$ for the others, and get the same cost.
}
Let $n$ be the first index such that the $n$th edge  traversed by $\Gamma$ is not incident to the depot, and differs from the $n$th edge traversed by $\Gamma'$. Let $e$ be the $n$th edge of $\Gamma$. $e$ is not disallowed, since $\Gamma$ traverses it. $e$ is not mandated, since $\Gamma'$ leaves its source via a different edge. Thus, some edge is neither disallowed nor mandated nor incident to the depot. Since $\LB(\theta) < \UB_\text{inc}$, the condition on Line \ref{algline:BPRC_check_lb_worse_than_ub} fails and we branch on such an edge. We push successors $\mathcal{B}'$ and $\mathcal{B}''$. If we branch on an edge traversed by $\mathcal{F}_\text{opt}$, $\theta_\text{opt}$ is feasible for $\LPB''$; otherwise, $\theta_\text{opt}$ is feasible for $\LPB'$. Thus (ii) holds when we run Line \ref{algline:BPRC_main_loop} next.

Thus, the induction hypothesis holds the next time we run Line \ref{algline:BPRC_main_loop}. Alg. \ref{alg:LazyBPRC} terminates since the branch-and-bound nodes form a finite tree. Termination only occurs if the stack is empty. Thus at some point, the stack is empty on Line \ref{algline:BPRC_main_loop}, so (ii) cannot hold. Thus (i) holds, i.e. we found $\mathcal{F}_\text{opt}$.
\end{proof}

\section{NUMERICAL RESULTS}\label{sec:numerical_results}
We ran experiments on an Intel i9-9820X 3.3GHz CPU with 10 cores, hyperthreading disabled, and 128 GB RAM. There are no baselines from prior work for the MT-VRP-O, so we compared Lazy BPRC to two ablations. The first ablation, ``Non-Lazy BPRC," is the same algorithm, but when a tour $\Gamma$ is generated in the labeling algorithm (Appendix B) and passes the checks to be added into $\mathcal{F}$, we compute $c^*(\Gamma)$ (if not yet computed), replace $\UB(\Gamma)$ and $\LB(\Gamma)$ with $c^*(\Gamma)$, rerun the checks, and add $\Gamma$ to $\mathcal{F}$ if the checks pass. The second ablation, ``Ignore-Obstacles-Heuristic," replaces our heuristic in the SPP-GCS in Section \ref{sec:tour_cost} with the obstacle-unaware heuristic from \cite{bhat2025AComplete}. That is, in Section \ref{sec:tour_cost}, $\max\{\dist(\tau_\text{a,rem}), h_n(t)\}$ is replaced by $\dist(\tau_\text{a,rem})$. Each algorithm parallelized successor generation in pricing and GCS search, and parallelized computation of segment-to-segment and segment-start-to-segment-start trajectory costs.

We generated problem instances by modifying the instance generation method in \cite{bhat2024AComplete} to handle multiple agents. In all instances, each target had two time windows, demand 1, and speed in each time window generated uniformly at random between 0.5 and 1 m/s. Each instance had three agents with $v_\text{max} = 4$ m/s. Our obstacle maps were square grids, but the agents and targets move in continuous space in the grids. The regions in our GCS were rectangles created by combining unoccupied grid cells as in \cite{feronato2026}. In Experiment 1, we varied the number of targets in random grids, as well as mazes from \cite{sturtevant2012benchmarks}. In random grids, the number of segments per target ($n_\text{seg,tar}$) was set to 64 for Lazy BPRC and Ignore-Obstacles-Heuristic, and 256 for Non-Lazy BPRC. In mazes, $n_\text{seg,tar}$ was 128 for all methods. We tuned these values by varying $n_\text{seg,tar}$ for each method, testing on 10 instances per $n_\text{seg,tar}$, and picking the value giving the best median computation time. We show tuning results in Experiment 2. The computation time limit was 10 min per planner per instance.

\subsection{Experiment 1: Varying Number of Targets}\label{sec:varying_number_of_targets}
We varied $\ntar$ from 6 to 15, setting capacity to $\ntar/\nagt$. We tested on random 30 $\times$ 30 grids, as well as the 10 mazes from the Moving AI benchmark \cite{sturtevant2012benchmarks} with corridor width of 32 cells. Fig. \ref{fig:compare_to_ablations} (a) shows the results. Lazy BPRC's runtime is no larger than the ablations' in any instance, except for one instance in random grids with 6 targets. In this instance, Lazy BPRC was faster than Non-Lazy BPRC and 4\% slower than Ignore-Obstacles-Heuristic. Our contributed GCS heuristic gave little advantage in this instance since GCS search took a minority of the runtime for Ignore-Obstacles-Heuristic. Thus, runtime variance in other algorithmic components (identical for both Lazy BPRC and Ignore-Obstacles-Heuristic) made Lazy BPRC underperform Ignore-Obstacle-Heuristic. To validate this, we ran Lazy BPRC 30 more times on this instance. Lazy BPRC sometimes underperformed and sometimes outperformed Ignore-Obstacles-Heuristic.

Lazy BPRC shows particular advantage over Ignore-Obstacles-Heuristic in mazes. This is expected. In a maze, modifying an obstacle-unaware trajectory to avoid obstacles may require initially traveling in the opposite direction from the original trajectory, to avoid a dead end. In a random grid, avoiding obstacles will often only require perturbing the trajectory at a few points, to avoid blocked cells. Thus, an obstacle-unaware heuristic is expected to perform worse in mazes than in random grids.

To assess the benefit of taking the max of the obstacle-unaware heuristic and our segment-based heuristic in Lazy BPRC, we ran additional experiments where we only used our segment-based heuristic, not taking a max. That is, in Section \ref{sec:tour_cost}, $f(P') = \dist(\tau_{\text{a}, P'}) + \dist(\tau_\text{a,next}) + h_n(t)$. In random grids, this modified Lazy BPRC underperformed Ignore-Obstacles-Heuristic in 8 of 40 instances, with runtime at most 34\% worse. In mazes, the modified Lazy BPRC never underperformed Ignore-Obstacles-Heuristic. Thus, taking the max helps in a minority of cases.

\begin{figure}
    \centering
    \vspace{0.2cm}
    \includegraphics[width=0.47\textwidth]{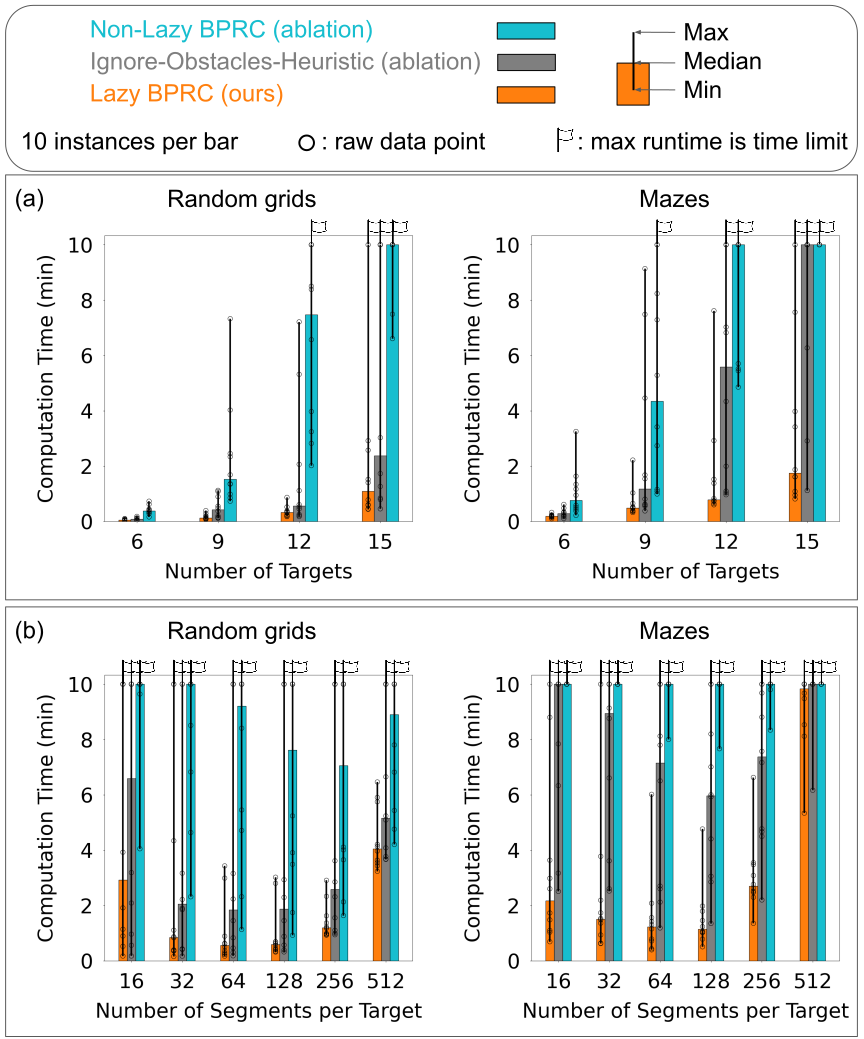}
    \vspace{-0.3cm}
    \caption{(a) Varying number of targets. Lazy BPRC's runtime is less than or equal to the ablations' in all instances, except one instance discussed in Section \ref{sec:varying_number_of_targets}. (b) Varying number of segments per target ($n_\text{seg,tar}$). For each instance-$n_\text{seg,tar}$ pair, Lazy BPRC's runtime is less than or equal to the ablations'.}
    \vspace{-0.6cm}
    \label{fig:compare_to_ablations}
\end{figure}

\subsection{Experiment 2: Varying Number of Segments per Target}\label{sec:varying_number_of_segments}
We fixed $\ntar$ to 12 and capacity to 4, and varied the number of segments per target ($n_\text{seg,tar}$) from 16 to 512. Each $n_\text{seg,tar}$ after 16 is obtained by multiplying the previous $n_\text{seg, tar}$ by 2. As mentioned earlier in the section, the results were used to determine the $n_\text{seg, tar}$ values used in other experiments, so we generated these instances with different random seeds than in Experiment 1. The results are in Fig. \ref{fig:compare_to_ablations} (b). For each instance-$n_\text{seg,tar}$ pair, in both random grids and mazes, Lazy BPRC's runtime is no larger than the ablations'.\footnote{For the instance-$n_\text{seg,tar}$ pair where Ignore-Obstacles-Heuristic spent the least percent of its runtime on GCS search, we saw a similar trend to Experiment 1. That is, if we ran Lazy BPRC multiple times for this pair, sometimes Lazy BPRC underperformed Ignore-Obstacles-Heuristic.} Since Non-Lazy BPRC's median runtime was the time limit for all $n_\text{seg,tar}$ in mazes, we tuned its $n_\text{seg,tar}$ for mazes using the same tuning procedure, but with 9 targets.

\section{CONCLUSIONS}
In this paper, we introduced Lazy BPRC, a new algorithm to find optimal solutions for the MT-VRP-O. One direction for future work is to pursue bounded-suboptimal solutions to enable scaling to more targets.

\section*{ACKNOWLEDGMENT}
This material is partially based on work supported by the National Science Foundation (NSF) under Grant No. 2120219 and 2120529. Any opinions, findings, conclusions, or recommendations expressed in this material are those of the author(s) and do not necessarily reflect views of the NSF.







\bibliographystyle{IEEEtran}
%
\bibliography{refs}

\section*{APPENDIX}\label{sec:appendix}
\subsection{Feasible Solution Generation}\label{sec:feas_soln_gen}
To generate the initial incumbent, as well as a feasible set of tours $\mathcal{F}_\text{new}$ when the RMP is infeasible, we extend the feasible solution generation from \cite{bhat2026optimal} to handle obstacles. In particular, given target-windows $\gamma$ and $\gamma'$, and time $t$, \cite{bhat2026optimal} requires computing, $\EFAT(\gamma, \gamma', t)$, which is the earliest time at which a feasible trajectory can intercept $\gamma'$ after intercepting $\gamma$ at time $t$. $\EFAT$ stands for ``earliest feasible arrival time." \cite{bhat2026optimal} also requires computing $\LFDT(\gamma, \gamma', t)$, which is the latest time that a feasible trajectory can intercept $\gamma$ before intercepting $\gamma'$ at time $t$. $\LFDT$ stands for ``latest feasible departure time." \cite{bhat2026optimal} computes $\EFAT$ and $\LFDT$ in the absence of obstacles. We compute them considering obstacles using the methods from \cite{bhat2024AComplete}.

\subsection{Labeling Algorithm for Pricing Problem}\label{sec:labeling_alg}
To solve the pricing problem in Section \ref{sec:branch_and_bound}, we modify the labeling algorithm in \cite{bhat2026optimal} to handle obstacles, as follows:
\begin{itemize}
    \item \cite{bhat2026optimal} requires computing costs of trajectories between pairs of segments, as well as costs of trajectories between pairs of segment start points, as in Section \ref{sec:lower_bound_on_tour_cost} and \ref{sec:upper_bound_on_tour_cost}. \cite{bhat2026optimal} computes costs in the absence of obstacles. We compute them considering obstacles.
    \item We use obstacle-aware versions of the $\EFAT$ and $\LFDT$ functions, as described in Appendix A.
    \item Let $c_\text{dual}(\Gamma_k) = \sum\limits_{i = 1}^\ntar \alpha(i, \Gamma_k)\lambda_i + \lambda_0$. \cite{bhat2026optimal} only adds a tour $\Gamma$ to $\mathcal{F}$ if both of the following conditions hold:
    \begin{enumerate}[label=(\roman*)]
        \item $c^*(\Gamma) - c_\text{dual}(\Gamma) < 0$
        \item For all previously found tours $\Gamma'$, $c^*(\Gamma) - c_\text{dual}(\Gamma) < c^*(\Gamma') - c_\text{dual}(\Gamma')$.
    \end{enumerate}
    Checking (i) and (ii) requires computing $c^*(\Gamma)$. To avoid this, we instead add $\Gamma$ to $\mathcal{F}$ if $\LB(\Gamma) - c_\text{dual}(\Gamma) < 0$ (necessary for (i)), and $\LB(\Gamma) - c_\text{dual}(\Gamma) < \UB(\Gamma') - c_\text{dual}(\Gamma')$ for all previously found tours $\Gamma'$ (necessary for (ii))). This ensures the tours we add to $\mathcal{F}$ include all tours we would have added using (i) and (ii).
\end{itemize}
For each tour $\Gamma$ generated by the labeling algorithm, $\LB(\Gamma)$ and $\UB(\Gamma)$ are computed as a byproduct: they are $\vec{g}_\text{lb}[1]$ and $\vec{g}_\text{ub}[1]$ in \cite{bhat2026optimal}. The bounds in \cite{bhat2026optimal} did not consider obstacles, but by using our obstacle-aware segment-to-segment and start-to-start trajectory costs in \cite{bhat2026optimal}, we get obstacle-aware bounds. 
\end{document}

\typeout{get arXiv to do 4 passes: Label(s) may have changed. Rerun}